\newcommand{\psdnew}[1]{\textcolor{black}{#1}} % New texts added
\newcommand{\tnm}[1]{{\textbf{\texttt{#1}}}} % Name of module/tool
\definecolor{LightCyan}{rgb}{0.88,1,1}
\newcommand{\SC}[1]{{\color{black}#1}}
\newcommand{\tu}[1]{{\underline{\textit{#1}}}}
\newcommand{\expec}{\mathbb{E}}
\newcommand{\reals}{\mathbb{R}}
\DeclareMathOperator{\indicator}{\mathds{1}}
\newcommand{\yoracle}{y_{\mathrm{oracle}}}
\newcommand{\pioffload}{\pi}
\newcommand{\Roffload}{R}
\newcommand{\aoffload}{a}
\newcommand{\costcloud}{c_{\mathrm{slow}}}
\newcommand{\costrobot}{c_{\mathrm{fast}}}
\newcommand{\alphaaccuracy}{\alpha}
\newcommand{\betacost}{\beta}
\newcommand{\DeltaL}{\Delta \mathcal{L}}
\newcommand{\argmax}{\mathop{\rm argmax}}
\newtheorem{thm}{Theorem}
\newtheorem{lem}{Lemma}
\newtheorem{problem}{Problem}
\newtheorem{assumption}{Assumption}
\newtheorem{property}{Property}
\newtheorem{example}{Example}
\newcommand{\frobot}{f_\mathrm{fast}}
\newcommand{\fcloud}{f_\mathrm{slow}}
\title{\LARGE \bf
Interpretable Trade-offs Between Robot Task Accuracy and Compute Efficiency
}
\author{Bineet Ghosh$^{1}$, Sandeep Chinchali$^{2}$, Parasara Sridhar Duggirala$^{1}$% <-this % stops a space
\thanks{$^{1}$Bineet Ghosh and Parasara Sridhar Duggirala are with the Department of Computer Science,
        The University of North Carolina at Chapel Hill, USA
    {\tt\small \{bineet,psd\}@cs.unc.edu}}%
\thanks{$^{2}$Sandeep Chinchali is with the Department of Electrical and Computer Engineering, The University of Texas at Austin, USA
        {\tt\small sandeepc@utexas.edu}}%
%\thanks{$^{1}$Bineet Ghosh is with the Department of Computer Science,
%        University of North Carolina at Chapel Hill, USA
%        {\tt\small bineet@cs.unc.edu}}%
%\thanks{$^{2}$Sandeep Chinchali is with the Department of Electrical Engineering, University of Texas at Austin, USA
%        {\tt\small sandeepc@utexas.edu}}%
%\thanks{$^{3}$Parasara Sridhar Duggirala is with the Department of Computer Science,
%        University of North Carolina at Chapel Hill, USA
%        {\tt\small psd@cs.unc.edu}}%
}
\begin{document}

\maketitle
\thispagestyle{empty}
\pagestyle{empty}

%%%%%%%%%%%%%%%%%%%%%%%%%%%%%%%%%%%%%%%%%%%%%%%%%%%%%%%%%%%%%%%%%%%%%%%%%%%%%%%%
\begin{abstract}

% Explain model selection problem in 2 sentences.
% The goal is to solve the \textit{Optimal Offloading Problem} for a very general setup.
% 
% This paper proposes a principled approach to solve \emph{Optimal Offloading Problem} where the robot has to make a decision whether to use the results from a faster but less accurate computational engine or to invoke a slower but more accurate one.
%
% Bad attempt 1:
A robot can invoke heterogeneous computation resources such as CPUs, cloud GPU servers, or even human computation for achieving a high-level goal.
The problem of invoking an appropriate computation model so that it will successfully complete a task while keeping its compute and energy costs within a budget is called a \emph{model selection problem}.
In this paper, we present an optimal solution to the model selection problem with two compute models, the first being fast but less accurate, and the second being slow but more accurate.
The main insight behind our solution is that \emph{a robot should invoke the slower compute model only when the benefits from the gain in accuracy outweigh the computational costs.}
We show that such cost-benefit analysis can be performed by leveraging the statistical correlation between the accuracy of fast and slow compute models.
\psdnew{We demonstrate the broad applicability of our approach to diverse problems such as perception using neural networks and safe navigation of a simulated Mars rover.} 
% We apply our approach on diverse problems such as
% Commenting linear regression.
% high-dimensional linear regression, 
% perception with deep neural networks and safe navigation of a simulated Mars rover to demonstrate its broad applicability.

% We perform this cost-benefit analysis by computing the estimates of gain in accuracy and compare it with the costs.

\end{abstract}

%%%%%%%%%%%%%%%%%%%%%%%%%%%%%%%%%%%%%%%%%%%%%%%%%%%%%%%%%%%%%%%%%%%%%%%%%%%%%%%%

\section{Introduction}
\label{sec:intro}
%\todo{To be done later . . .}
Ideally, robotic computation should be highly accurate, responsive, and fast, as well as compute-and-power-efficient. %for agile autonomous operation. %in dynamic environments.
Modern robots, however, face the challenge of selecting from an array of heterogeneous compute resources, each with a unique trade-off between accuracy and compute cost.
%, such as compute time, power, or network delay. 
For example, should a factory robot trust the perception results from 
\psdnew{an on-board deep neural network (DNN)}
% a fast, low-power deep neural network (DNN) 
or ask a busy human supervisor for help? Likewise, should a small drone compute
% Sridhar: probably unnecessary.
% an approximate reachable set or 
its motion plan locally, or wait for a higher-fidelity 
% result 
\psdnew{plan} from a remote server?
% in a 5G wireless network?
At their core, these scenarios are 
% fundamentally a 
\psdnew{instances of a} \textit{compute model selection} \psdnew{problem}, where a robot must gracefully balance task-relevant accuracy with compute time, power, or network and human-processing delay. 

Figure~\ref{fig:one} illustrates the model selection problem addressed in this paper. 
Given the sensor observations $x$ at each time step, a robot's model selection policy 
% $\pi_{\mathrm{select}}$ 
$\pi$ must dynamically invoke either a fast, compute-and-power-efficient model ($f_{\mathrm{fast}}$) or a slower, more accurate model ($f_{\mathrm{slow}}$) based on a high-level task's required accuracy. 
Variants of this problem have been studied for perception tasks in cloud robotics \cite{chinchali2019network,rahman2017motion} and human-robot collaboration \cite{cakmak2012designing,whitney2017reducing,kaipa2016enhancing}. 
However, existing works either offer specialized point-solutions (e.g. for perception \cite{eshratifar2020runtime,dorka2020modality}) that do not readily generalize to other domains, use hand-engineered heuristics, or employ uninterpretable, learning-based algorithms \cite{chinchali2019network,dinh2018learning}. 
% model selection algorithms \cite{chinchali2019network,dinh2018learning}. 
\psdnew{\emph{Our key contribution is to provide a unified, interpretable, and theoretically-grounded framework for compute model selection in robotics.}} 

The fundamental principle behind selecting an appropriate compute model is to perform a cost-benefit analysis.
%
%\new{Our key insight is that the decision to invoke the slow model or not can be made at the robot's side by leveraging the statistical, and often analytical, correlation between the accuracy of the fast and the slow compute models.} 
Our key insight is that a robot's model selection algorithm can leverage the statistical, and often analytical, correlation between the accuracy of the fast and the slow compute models.
This correlation can enable us to perform \emph{reliable and interpretable} cost-benefit analysis between compute cost and gain in accuracy for the different models.
%
% Then, our model selection algorithm arises as a direct consequence of such accuracy correlations, since they enable a robot to \textit{reliably and interpretably balance} marginal gains in task accuracy with additional compute cost. 
%
Crucially, such correlations between fast and slow models are now possible even for state-of-the-art DNNs, due to recent advances that compress large DNNs with provable approximation guarantees \cite{baykal2019datadependent,liebenwein2020provable}.
%Our key insight is that model selection can be both interpretable and come with formal guarantees if robots could predict the statistical accuracy correlation between fast and slow compute models. Then, they can principally trade-off the marginal gain in task accuracy with compute cost in a decision-theoretic manner. 
%Crucially, such correlations between fast and slow models are now possible even for complex DNNs, due to recent advances that compress large DNNs with provable approximation guarantees.

\begin{figure}[t]
\vskip 0.22in
\begin{center}
{\includegraphics[width=1.0\columnwidth]{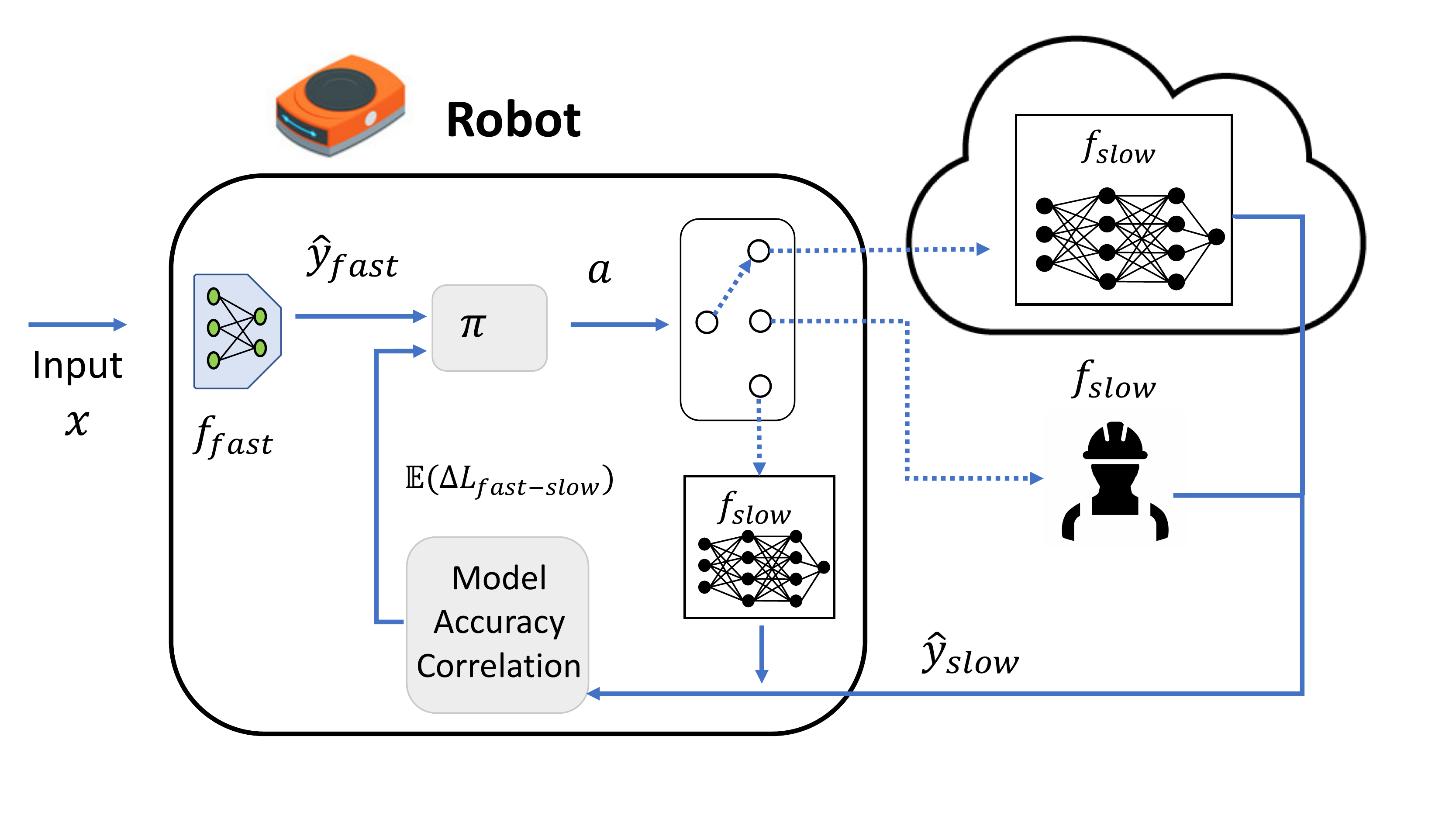}}
\end{center}
\vskip -0.2in
\caption{\textbf{The Compute Model Selection Problem: }
A robot must balance task accuracy and compute cost, such as energy or latency, when choosing between heterogeneous compute resources. Our interpretable model selection policy $\pi$ leverages the statistical correlation between fast and slow compute models $f_{\mathrm{fast}}$ and $f_{\mathrm{slow}}$ to dynamically decide which model to invoke.}
\label{fig:one}
\vskip -0.25in
\end{figure}

\textit{Literature Review: }
Our work is broadly related to computational offloading in cloud robotics as well as teacher feedback for human-robot interaction. 
The closest work to ours is \cite{chinchali2019network}, which develops a deep reinforcement learning (RL) policy to select between a fast, less accurate deep neural network (DNN) or slower, more accurate DNN running at a cloud computing server. Indeed, we explicitly build upon \cite{chinchali2019network} in our formulation by considering fast and slow compute models with a hierarchy of compute costs. In stark contrast to \cite{chinchali2019network}, however, we avoid uninterpretable, RL-based model selection \psdnew{policies. 
Instead we leverage statistical correlations between fast and slow computational models, such as compression algorithms for DNNs \cite{baykal2019datadependent,liebenwein2020provable}.
}
% by making the key insight that the statistical correlation between fast and slow computation models can often be analytically quantified, such as for DNNs compressed with provable approximation guarantees \cite{baykal2019datadependent,liebenwein2020provable}. 
%
Further, unlike \cite{chinchali2019network}, we introduce a theoretically-grounded cost-benefit analysis for model selection, which generalizes beyond DNNs to high-dimensional linear regression and even sampling-based reachability problems, as shown in our evaluation. 
%By introducing a theoretical cost-benefit analysis for selection between two abstract computation models, our work  also differs from a variety of specialized learning-based or heuristic solutions for DNN model selection in robotics and edge computing \cite{eshratifar2020runtime,dorka2020modality,dinh2018learning}.

Our work is also inspired by methods to compress large DNNs for efficient inference on compute-and-power limited robots. For example, the EfficientNet \cite{tan2019efficientnet} suite of vision models provides 7 model variants that trade-off accuracy with model size and latency. Importantly, recent methods utilize core-set theory to train fast, compressed DNNs that provably approximate a slower DNN by pruning convolutional filters based on sensitivity analyses \cite{baykal2019datadependent,liebenwein2020provable}.
%In our framework, the compressed DNN serves as our ``fast'' model, the original DNN serves as the ``large'' model, and the
%
% Sridhar: I think this point has been made several times, we probably don't need this.
% In our framework, the core-set based approximation guarantee enables us to reliably relate predictions from the fast, compressed and slow, original DNN.
 
%In our framework, the compressed DNN serves as our ``fast'' model, the original DNN serves as the ``large'' model, and the core-set based provable approximation guarantee enables us to correlate both models.
% trade-offs between DNNs, mobilenet, efficientnet, and core-sets, provable approximations
% teacher-feedback in robotics

Finally, our work is related to scenarios where a robot must selectively ask a human teacher for clarification during active learning tasks \cite{cakmak2012designing,whitney2017reducing} or remote assistance for manipulation \cite{kaipa2016enhancing}. In principle, our framework applies to such settings if a robot can accurately correlate its confidence with the marginal accuracy gain it receives from human feedback. In practice, however, such correlations can often be learned from historical interaction data but are hard to \textit{analytically} quantify.

\textit{Contributions: } Given prior literature, our contributions are three-fold. First, we design an interpretable model selection algorithm which leverages analytical correlations between fast and slow model performance to dynamically decide which model to invoke. Second, we show how our algorithm can naturally leverage recent advances that compress large DNNs with provable approximation guarantees that relate fast and slow models. Third, we show strong experimental performance of our algorithm on diverse domains ranging from robotic perception to sampling-based reachability analysis for a simulated rover navigating Martian terrain data.
 
\textit{Organization: } This paper is organized as follows. In Section \ref{sec:probStatement}, we introduce a general formulation for model selection to gracefully trade-off task accuracy and compute costs. 
In Sec. \ref{sec:modelSelection}, we provide theoretical guarantees for model selection and instantiate them for applications in perception and reachability analysis in Sec. \ref{sec:app_scenarios}. 
Finally, we provide our experimental results in Sec. \ref{sec:eval} and conclude in Sec. \ref{sec:conclusion}.

% old: 
%In practice, however, robots face the challenge of choosing between an array of heterogeneous compute resources, each with a unique trade-off between accuracy and cost, such as compute time, power, or network delay. 
%To achieve this goal, today's robots must choose from a wide array of heterogenous compute resources, ranging from on-board deep neural networks (DNNs), to remote databases at an edge computing server, or even human tele-operation. Each compute resource has a unique trade-off between accuracy and cost, such as compute time, power, or network/human-processing delay.
% problem not new, solutions range from learning-based offloading schemes or xyz
%The problem of selecting between a fast, efficient compute model or a slower, but more accurate model has 
%Our key contribution is to make model selection interpretable and amenable to formal guarantees in a wide variety of scenarios where robots can predict the accuracy correlation between fast and slow compute models. Then, we contribution

%we leverage recent advances that compress large DNNs into smaller DNNs that provably approximate the original model, which serves as a foundation to correlate fast and slow DNN model accuracy.

\section{Problem Statement}
\label{sec:probStatement}
%\label{sec:probStatement}
In this section, we formally define the problem of model selection depicted in Fig.~\ref{fig:one} by introducing compute models, an accuracy metric, and a performance criterion. 
%\cite{chinchali2019network}, where a robot must choose between a fast, compute-efficient, but less accurate computation model and a slower, compute-intensive, but more accurate model.

\tu{Compute Model Input}:
The input to the compute model, at time $t$, is denoted by $x^t \in \mathbb{R}^n$. We denote the input data distribution by $\mathcal{X}$, that is, $x^t \sim  \mathcal{X}$. In practice, $x^t$ could represent a depth-camera image or laser scan.
%In the rest of the paper, we do not mention the distribution $\mathcal{X}$ associated with the input when it is clear from the context.

\tu{Compute Models}:
The compute models are denoted by $f_i: \mathbb{R}^n \rightarrow \mathbb{R}^m$, where $i \in \{\text{fast}, \text{slow}\}$. Given an input $x^t$, the output is denoted by $y_i^t=f_i(x^t)$. The cost associated with $f_{i}$ is given by $c_{i} \in \mathbb{R}_{+}$. The cost is context-dependent, such as battery consumption, compute inference latency, or even communication latency for cloud robotics tasks. For example, the compute models could be a DNN, with image input $x^t$ and corresponding segmentation $y^t$. The distribution of outputs is denoted by $\mathcal{Y}$, that is, $y_i^t \sim  \mathcal{Y}$. 
%\new{We assume that the ground truth value for $f_{\text{slow}}$ (and equivalently $f_{\text{fast}}$) is represented by $f_{\text{oracle}}: \mathbb{R}^n \rightarrow \mathbb{R}^m$}.
The ground-truth output associated with input $x^t$ is denoted by $y^t_{\text{oracle}}$.
%$f_{\text{slow}}$ (and equivalently $f_{\text{fast}}$) is represented by $f_{\text{oracle}}: \mathbb{R}^n \rightarrow \mathbb{R}^m$}.

%In the rest of the paper, we do not mention the distribution $\mathcal{Y}$ associated with the output when it is clear from the context.

\tu{Loss Function}: Let $\mathcal{L}(y^t_1,y^t_2): \mathbb{R}^m \times \mathbb{R}^m \rightarrow \mathbb{R}_{\ge 0}$ be the loss function, that formally quantifies the quality of the output $y^t_1$ returned by a compute model compared to the ground-truth result of $y^t_2$. A lower value of $\mathcal{L}(\cdot)$ indicates a more accurate output. In practice, the loss function is context-dependent, such as the cross-entropy loss for image classification.

\tu{Model Selection Policy}: Given an input $x^t$, the model selection policy decides whether to use the slow compute model $f_{\text{slow}}$, or the fast model $f_{\text{fast}}$. 
We assume that the policy has access to the results of the fast model (without this information, the policy would be purely random).
% Sridhar: The below line is weird and doesn't flow.
% since it is needed for real-time robotic computation, such as a compute-efficient perception DNN. 
%
Therefore, the problem of model selection is to infer whether or not to \textit{additionally} invoke the slow model to enhance task accuracy if the fast model results are insufficient for a robot's high-level goal.
The challenge is that the robot only has access to the input $x^t$ and the fast model output $y^t_{\text{fast}}$ and thus must estimate the accuracy benefit of the slow model \textit{before} invoking it. 
%. That is, at the time of deciding whether to further invoke the slow model or not, $x^t$ and $y^t_{\text{fast}}$ are available. 

Formally, we define the model selection policy as $\pioffload: \mathbb{R}^n \times \mathbb{R}^m \rightarrow \{0, 1\}$. Given input $x^t$ and fast model prediction $y^t_{\text{fast}}$, we define the action as $\aoffload^t=\pioffload(x^t,y^t_{\text{fast}})$. The action $\aoffload^t=0$ indicates selecting the fast model $f_\text{fast}$, and $\aoffload^t=1$ indicates selecting the slow model $f_\text{slow}$. We define the cost associated with each action as $\mathtt{cost}(\aoffload^t)$:
\begin{equation*}
    \mathtt{cost}(\aoffload^t) = \begin{cases} c_{\text{fast}} &\mbox{if } \aoffload^t = 0 \\
    c_{\text{fast}} + c_{\text{slow}} &\mbox{if } \aoffload^t = 1 \end{cases}.
\end{equation*}

\tu{Reward}: To simultaneously achieve high task accuracy while minimizing the cost of compute, we introduce a per-timestep reward. Given input $x^t$, the output of the fast model $y^t_{\text{fast}}$, and model selection $\aoffload^t$, the corresponding reward is: 
\begin{equation}
    \Roffload^t(\aoffload^t) = \begin{cases} -\alphaaccuracy \mathcal{L}(y^t_{\text{fast}},y^t_{\text{oracle}}) 
    - \betacost \mathtt{cost}(0) ~~~\mbox{if } \aoffload^t = 0 \\
    -\alphaaccuracy  \mathcal{L}(y^t_{\text{slow}},y^t_{\text{oracle}})
    - \betacost \mathtt{cost}(1) ~~~ \mbox{if } \aoffload^t = 1 \end{cases}
\label{eq:inference_reward}
\end{equation}
where $\alphaaccuracy$, $\betacost$ $\in \mathbb{R}_{+}$ are user-defined weights to balance
the emphasis on accuracy and cost. These can be flexibly set by a roboticist given the unique requirements of a high-level task. For example, a fleet of low-power, compute-limited warehouse robots that rarely interact with humans might have a higher emphasis $\betacost$ on cost to minimize how many times they query a shared central server or remote human supervisor. Conversely, robots that operate in safety-critical scenarios will have a much higher emphasis on accuracy given by $\alpha$.

\subsection{Formal Problem Definition}
Given a stream of $N$ inputs,
%for time horizon $N$, 
$\{x^1, x^2, \cdots, x^N\}$, our goal is to propose an optimal model selection policy $\pioffload^*$, that provably maximizes the expected cumulative reward:
$$
\expec{\sum_{t=0}^{N} \Roffload^t \big(\pioffload^*(x^t,y^t_{\text{fast}})\big)}
$$
Intuitively, $\pioffload^*$ achieves the optimal balance between the cost and accuracy over the given period of $N$ time steps. We now formally define the model selection problem.

\begin{problem}[Model Selection for Inference]
%\label{prob:offloading}
Given fast model $\frobot$, slow model $\fcloud$, loss function $\mathcal{L}(\cdot)$, and model selection cost $\mathtt{cost}(\cdot)$, find the optimal model selection policy $\pioffload^{*}$, which maximizes the reward (Equation \ref{eq:inference_reward}) over a finite horizon $N$:
    \begin{align*}
        \pioffload^{*} = \argmax_{\pioffload} \expec{\sum_{t=0}^{N} \Roffload^t \big(a^t = \pioffload(x^t, y^t_{\text{fast}}) \big)}. 
    \end{align*}
\label{prob:inference}
\end{problem}
Section \ref{sec:modelSelection} provides our solution to Problem \ref{prob:inference}.
%Our solution to the optimal model selection problem (Problem \ref{prob:inference}) is presented i n 
%Our solution to the optimal model selection problem (Problem \ref{prob:inference}) is presented in Section \ref{sec:modelSelection}.

\subsection{Discussion on the Problem Definition}
\label{subsec:discussionProb}
The model selection problem is broadly applicable in robotics since it is agnostic to the nature of the compute models, the loss function, or even costs. For example, the models could represent small quantized and large, compute-intensive DNNs or even small and large databases or random forests. Further, the costs could represent battery consumption or communication delay or model inference time. 

The main challenge of this problem is the limited information available to the selection policy, namely the input $x^t$, fast model output $y^t_{\text{fast}}$, and the cost function $\mathtt{cost}(\cdot)$. The key challenge is to estimate the accuracy of the slow model, $f_{\text{slow}}$, before even invoking it, which motivates our key technical approach to statistically relate both models' accuracy.

%The main challenge of this problem is the limited amount of information available to the model selection policy. At any given time $t$, the information available to $\pioffload(\cdot)$ are $x^t$, $y^t_{\text{fast}}$ and the cost function $\mathtt{cost}(\cdot)$. Therefore, the key challenge is to estimate the accuracy of the slow model, $f_{\text{slow}}$, before even invoking it, which motivates our key technical approach to statistically relate both models' accuracy.
%We demonstrate the generality of this approach by applying it to linear regression models, general-purpose DNNs, and safety verification of dynamical systems using reachability analysis.

\section{An Algorithmic Approach To Model Selection}
\label{sec:modelSelection}
In this section, we provide an optimal solution to the model selection problem (Problem \ref{prob:inference}). First, we make the following practically-motivated assumption.

\begin{assumption}[Action and State Independence]
\label{assumption:probOff}
Given a model input $x^t$ at any time $t$, the model selection $a^t$ of policy $\pioffload$ does not affect the next robot measurement $x^{t+1}$. 
%We assume $x^t$ are independent and identically distributed (iid). And the decision of the model selection policy $\pioffload^*$ (that is, the decision to use the fast or slow model to determine the output) does not affect the input that the robot sees at the next time step, that is, $x^{t+1}$. 
\end{assumption}

Our assumption is practical in many robotics scenarios, since $a^t$ is simply a choice of a compute model to process inputs, not a physical \textit{actuation} decision. For example, a robot can run a fast perception DNN on images $x^t$ at every timestep and its choice to optionally consult a slower DNN $a^t$ does not affect the new image observation $x^{t+1}$, which is instead largely affected by its ego-motion and surroundings. Our assumption will not hold for fast-moving robots whose control decisions are heavily dependent on the perception model they invoke, which we discuss in our future work.
%\SCnew{For example, a slow-moving warehouse robot might see the same scene on a shelf regardless of what compute model it chooses even though its ability to \textit{detect} objects to grasp heavily depends on the compute model invoked. We note tat

%Our assumption holds in most of the practical application scenarios. In such cases, one can run a DNN per time step on a embedded device. 

\begin{thm}
    The optimal model selection policy that solves Problem \ref{prob:inference} is of the form:
\begin{align*}
    \pioffload^{*}(x^t,y^t_{\text{fast}}) = \indicator 
    \bigg (\frac{\betacost}{\alphaaccuracy} \costcloud &< \\ \expec \underbrace{\big( \mathcal{L}(y^t_{\text{fast}},\yoracle^t) - \mathcal{L}(y^t_{\text{slow}},\yoracle^t) \bigg)}_{\mathrm{~task~accuracy~gain}}
\end{align*}
%\label{thm:general_inference_offload}
\label{thm:gen_offload_oracle}
\end{thm}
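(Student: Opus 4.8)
The plan is to exploit Assumption~\ref{assumption:probOff} to collapse the $N$-step objective into $N$ independent one-shot decisions, and then to settle each one-shot decision by comparing the two actions' conditional expected rewards head to head.

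First I would establish the temporal decoupling. Since $\aoffload^t$ does not affect $x^{t+1}$ (Assumption~\ref{assumption:probOff}), it affects no later input $x^{t'}$ with $t'>t$, hence no later fast-model output $y^{t'}_{\text{fast}}$; and because $\Roffload^{t'}$ is a function only of $(x^{t'},y^{t'}_{\text{fast}},\aoffload^{t'})$, the choice of $\aoffload^t$ touches exactly one term of the cumulative reward. By linearity of expectation,
\[
\expec\!\left[\sum_{t=0}^{N}\Roffload^t\big(\pioffload(x^t,y^t_{\text{fast}})\big)\right] = \sum_{t=0}^{N}\expec\!\left[\Roffload^t\big(\pioffload(x^t,y^t_{\text{fast}})\big)\right],
\]
and, by the tower property, each summand equals $\expec_{x^t,y^t_{\text{fast}}}\!\big[g_t(\pioffload(x^t,y^t_{\text{fast}}))\big]$ where $g_t(a)\deq\expec[\Roffload^t(a)\mid x^t,y^t_{\text{fast}}]$. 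Since nothing couples the choices across different realizations of $(x^t,y^t_{\text{fast}})$ or across $t$, the objective is maximized by picking, pointwise, $\pioffload^*(x^t,y^t_{\text{fast}})\in\argmax_{a\in\{0,1\}}g_t(a)$. (Equivalently, this is the degenerate Bellman recursion whose transition kernel is action-independent, so the optimal policy is greedy in the immediate reward; I would state it in that language if a referee wants more rigor.)

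Next I would fix $t$, condition on $(x^t,y^t_{\text{fast}})$, and expand $g_t(0)$ and $g_t(1)$ using Equation~\ref{eq:inference_reward} together with $\mathtt{cost}(0)=\costrobot$ and $\mathtt{cost}(1)=\costrobot+\costcloud$:
\begin{align*}
g_t(0) &= -\alphaaccuracy\,\expec[\mathcal{L}(y^t_{\text{fast}},\yoracle^t)\mid x^t,y^t_{\text{fast}}]-\betacost\costrobot,\\
g_t(1) &= -\alphaaccuracy\,\expec[\mathcal{L}(y^t_{\text{slow}},\yoracle^t)\mid x^t,y^t_{\text{fast}}]-\betacost(\costrobot+\costcloud).
\end{align*}
The greedy choice is $\aoffload^t=1$ exactly when $g_t(1)>g_t(0)$, i.e. when $-\betacost\costcloud>\alphaaccuracy\,\expec[\mathcal{L}(y^t_{\text{slow}},\yoracle^t)-\mathcal{L}(y^t_{\text{fast}},\yoracle^t)\mid x^t,y^t_{\text{fast}}]$; dividing by $\alphaaccuracy>0$ and rearranging yields $\tfrac{\betacost}{\alphaaccuracy}\costcloud<\expec[\mathcal{L}(y^t_{\text{fast}},\yoracle^t)-\mathcal{L}(y^t_{\text{slow}},\yoracle^t)\mid x^t,y^t_{\text{fast}}]$, whose indicator is precisely the claimed form of $\pioffload^*$. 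Ties can be broken toward the fast model, matching the strict inequality in the statement.

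The hard part is really only the first step: arguing cleanly that under Assumption~\ref{assumption:probOff} no cross-timestep ``hedging'' can ever help, so the globally optimal policy coincides with the per-timestep greedy one. After that it is an elementary comparison of two scalars. Two points I would flag explicitly in the writeup: (i) the expectation in the theorem statement must be read as the conditional expectation given the information the policy actually has, $(x^t,y^t_{\text{fast}})$ --- this is exactly the quantity the later perception and reachability instantiations approximate via the fast--slow accuracy correlation; and (ii) all expectations are finite since $\mathcal{L}\ge 0$ and we may assume $\mathcal{L}$ has finite mean under the input distribution $\mathcal{X}$, so the rearrangements above are valid.
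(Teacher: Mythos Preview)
Your proposal is correct and follows essentially the same route as the paper's own proof: use Assumption~\ref{assumption:probOff} to reduce the horizon-$N$ problem to independent per-timestep maximizations, then compare the two actions' expected rewards from Equation~\ref{eq:inference_reward} and simplify. Your version is somewhat more careful (explicit tower property, conditional expectations $g_t(a)$, tie-breaking, finiteness), but the argument is the same.
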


\begin{proof}
    %Given $x^t$ is iid (from Assumption \ref{assumption:probOff}), the optimal model selection policy is optimal for every $x^t$.
    By Assumption \ref{assumption:probOff}, the action $a^t$ at every time does not affect the next state $x^{t+1}$. Thus, given any input $x^t$, the actions $a^t$ are independent, so to maximize the cumulative reward it suffices to maximize the reward at every time-step independently.
Recall from Equation \ref{eq:inference_reward} that the reward depends on two choices of $\aoffload^t$, that is, $\aoffload^t \in \{0,1\}$. Therefore, Problem \ref{prob:inference} can be rewritten as:
\begin{small}
\begin{align*}
    \pioffload^{*}(x^t, y^t_{\mathrm{fast}}) &= \argmax_{\aoffload^t \in \{0,1\}} \expec(\Roffload^t(\aoffload^t))
%
%                            &= \indicator\bigg( ~\expec(\Roffload^t(s^t, \aoffload^t = 1))  > \expec(\Roffload^t(s^t, \aoffload^t = 0)) \bigg).
\end{align*}
\end{small}
Substituting in the reward definition (Equation \ref{eq:inference_reward}),
we see that we should choose the slow model only when the associated reward is higher than continuing with the fast model. Thus, we choose $\aoffload^t = 1$ only when:
%Substituting in the reward definition (Equation \ref{eq:inference_reward}),
%we see that we should choose the slow model with action $\aoffload^t = 1$ only when the benefit of gain in accuracy, and the cost of invoking slow model achieves a better reward, that is:
    \begin{small}
    \begin{align*}
        -\alphaaccuracy \expec\big(\mathcal{L}(\yoracle^t, y_{\text{slow}}^t)\big) - \betacost (\costcloud + \costrobot) &> \\ -\alphaaccuracy \expec\big(\mathcal{L}(\yoracle^t, y_{\text{fast}}^t)\big) - \betacost \costrobot
    \end{align*}
    \end{small}
Simplifying, we arrive at the desired result:
\begin{small}
\begin{align}
    \pioffload^{*}(s^t) = \indicator 
    \bigg (\frac{\betacost}{\alphaaccuracy} \underbrace{\costcloud}_{\mathrm{extra~compute~cost}} &< \\ \nonumber \expec\big( \underbrace{\mathcal{L}(\yoracle^t, y_{\text{fast}}^t) - \mathcal{L}(\yoracle^t, y_{\text{slow}}^t)}_{\mathrm{task~accuracy~benefit}} \bigg).
\end{align}
\end{small}
\label{proof:general_inference_offload}
\end{proof}

Theorem \ref{thm:gen_offload_oracle} suggests a simple model selection policy, which estimates the model accuracy gap $\DeltaL =  \expec\big( \mathcal{L}(\yoracle^t, y^t_{\text{fast}}) - \mathcal{L}(\yoracle^t, y^t_{\text{slow}}) \big)$, and only chooses the slow model if the gap is greater than a threshold that depends on the relative compute costs and weights of accuracy via $\alpha, \beta$. %That is, offloading happens when the gain in accuracy outweighs the cost of offloading. 
However, the key challenge is that calculating $\DeltaL$ requires querying the slow model and knowledge of the ground-truth value $\yoracle^t$. We now transition to two practical approaches to directly instantiate the guarantees from Theorem \ref{thm:gen_offload_oracle} in practice.

First, we note that in many practical deployment scenarios, the ground-truth oracle values are not present. In such practical settings, the more accurate slow model simply serves as the ground-truth, such as when a slow human supervisor makes ground-truth decisions. In the absence of human annotations, a large, compute-intensive DNN can serve as the slow model and ground-truth.
Thus, we present the following lemma of Theorem \ref{thm:gen_offload_oracle}. 
\begin{lem}
    The optimal model selection policy that solves Problem \ref{prob:inference}, when $y^t_{\text{oracle}}=y^t_{\text{slow}}$ at all times $t$ is:  
\begin{align*}
    \pioffload^{*}(x^t,y^t_{\text{fast}}) = \indicator 
    \bigg (\frac{\betacost}{\alphaaccuracy} \costcloud <  \expec\left[ \mathcal{L}(y^t_{\text{fast}},y^t_{\text{slow}}) \right] \bigg)
\end{align*}
\label{lem:general_inference_offload}
\end{lem}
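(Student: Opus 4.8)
The plan is to derive the statement directly as the specialization of Theorem~\ref{thm:gen_offload_oracle} to the case $\yoracle^t = \ycloud^t$, rather than re-deriving anything from the reward. First I would quote Theorem~\ref{thm:gen_offload_oracle}, which already gives the optimal policy in the threshold form $\pioffload^{*}(x^t,\yrobot^t) = \indicator\big(\tfrac{\betacost}{\alphaaccuracy}\costcloud < \expec[\mathcal{L}(\yrobot^t,\yoracle^t) - \mathcal{L}(\ycloud^t,\yoracle^t)]\big)$, and then substitute $\yoracle^t = \ycloud^t$ everywhere it appears inside the expectation.

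The one substantive step is to note that $\mathcal{L}(\ycloud^t,\yoracle^t) = \mathcal{L}(\ycloud^t,\ycloud^t) = 0$, i.e. the loss of an output against itself is zero — consistent with the paper's convention that $\mathcal{L}$ is nonnegative and that a smaller value indicates a more accurate output. Granting this, the second term in the accuracy-gain expectation drops out and the first becomes $\mathcal{L}(\yrobot^t,\ycloud^t)$, so the indicator reduces to $\indicator\big(\tfrac{\betacost}{\alphaaccuracy}\costcloud < \expec[\mathcal{L}(\yrobot^t,\ycloud^t)]\big)$, which is exactly the claimed policy. No further manipulation is needed.

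For robustness I would also include a short self-contained version that repeats the per-timestep argument of Theorem~\ref{thm:gen_offload_oracle} with $\yoracle^t$ replaced by $\ycloud^t$ from the outset: by Assumption~\ref{assumption:probOff} it suffices to maximize the expected reward at each time step separately; the reward for $\aoffload^t=1$ is $-\alphaaccuracy\expec[\mathcal{L}(\ycloud^t,\ycloud^t)] - \betacost(\costcloud+\costrobot) = -\betacost(\costcloud+\costrobot)$, the reward for $\aoffload^t=0$ is $-\alphaaccuracy\expec[\mathcal{L}(\yrobot^t,\ycloud^t)] - \betacost\costrobot$; requiring the former to be strictly above the latter and cancelling the common $-\betacost\costrobot$ gives $\tfrac{\betacost}{\alphaaccuracy}\costcloud < \expec[\mathcal{L}(\yrobot^t,\ycloud^t)]$, hence the stated indicator.

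The hard part here is essentially nil — the argument is a substitution plus a cancellation already performed for Theorem~\ref{thm:gen_offload_oracle}. The only thing to be careful about is the implicit property $\mathcal{L}(y,y)=0$, which is used but not formally assumed in Section~\ref{sec:probStatement}; I would either promote it to a standing assumption on the loss or remark that every loss considered in the paper (squared error, cross-entropy against a one-hot encoding of itself, and so on) satisfies it, so the specialization is legitimate.
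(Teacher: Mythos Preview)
Your proposal is correct and matches the paper's own proof essentially verbatim: the paper simply says the argument is the same as Theorem~\ref{thm:gen_offload_oracle}, noting that $\expec[\mathcal{L}(\yoracle^t,\ycloud^t)]=0$ when the oracle and slow models coincide. Your additional remark that $\mathcal{L}(y,y)=0$ is being used implicitly is a valid observation the paper leaves unstated.
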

\begin{proof}
    The proof is the same as Theorem \ref{thm:gen_offload_oracle}, where we note that $\expec \big(\mathcal{L}(y^t_{\text{oracle}},y^t_{\text{slow}}) \big) = 0$ when the oracle and slow models are identical.
\end{proof}
For our evaluation, we use Lemma \ref{lem:general_inference_offload} as our model selection policy as it best reflects practical autonomous deployments. The key challenge to directly applying Theorem 1 and Lemma 1 is to accurately estimate the loss between fast and slow models solely using predictions from the fast model. However, we now show we can indeed compute the expected accuracy benefit
for a broad class of fast and slow models that are related by provable approximation guarantees.
Specifically, in Subsection \ref{subsec:linear_reg}, we instantiate the guarantees of Lemma \ref{lem:general_inference_offload} to provide a closed-form, analytic model selection policy for linear regression problems. Crucially, we then extend our analysis to DNN inference in Subsection \ref{subsec:dnn}.
Lemma \ref{lem:general_inference_offload} provides a general framework for model selection. For a novel setting, it can be instantiated by selecting the: (i) compute models, (ii) loss function, (iii) compute model cost, and (iv) characterizing the statistical relationship between compute models to derive the selection policy. Sections \ref{subsec:linear_reg} and \ref{subsec:dnn} instantiate Lemma \ref{lem:general_inference_offload} for specific cases of Linear Regression and DNN inference.

\subsection{Analytical Results for Linear Regression}
\label{subsec:linear_reg}
We now apply the guarantees from Lemma \ref{lem:general_inference_offload} to an illustrative warm-up example of high-dimensional linear regression.
Recall that our challenge is to estimate the expected value of $y^t_{\text{slow}}$ from the information available to the selection policy, namely input $x^t$ and fast model prediction $y^t_{\text{fast}}$.
To overcome this challenge, we apply results to approximate linear regression models using \textit{coresets} \cite{Boutsidis2012RichCF}, which are importance-ranked subsets of a large training dataset. Importantly, a model trained on just the coreset will provably approximate the predictions of one trained on the full dataset. For example, a fast model could be trained on only a core-set of local data on-board a robot while a large one could be trained on multiple robots' data in the cloud.
%Coresets are analogous to support vectors in support vector machines. 
%Coresets contain a subset of dataset that can be used to find provable approximate solution to the full problem by solving a smaller problem. 
%
%Compressing a linear regression model using coresets can provide analytical bounds relating the original and the compressed model. In the rest of the subsection, we assume the fast model is a compressed version of the slow model using coresets, as described in \cite{Boutsidis2012RichCF}.

\tu{Compute Models}: Let the fast and slow compute models $f_i$ be linear regression models $f_i(x)=A_ix+b_i$, where $i \in \{\text{fast},\text{slow}\}$, $A_i \in \mathbb{R}^{m \times n}$, and $b_i \in \mathbb{R}^{m \times 1}$. 
We assume the slow model $f_{\text{slow}}$ is learned on a full set of training samples from a joint distribution on $\mathcal{X} \times \mathcal{Y}$, while the fast model is only trained on a core-set of the original data.
%set of independent
%and identically distributed (iid) 

\tu{Loss Function}: Let the loss function be the standard $L_2$ norm loss: 
$
\mathcal{L}(y^t_1,y^t_2) = 
||y^t_1 - y^t_2||^2_2
$;
where $y^t_1, y^t_2 \in \mathbb{R}^m$.
The following coreset guarantees follow from \cite{Boutsidis2012RichCF}:
\begin{property}[Relation between fast and slow models \cite{Boutsidis2012RichCF}]
For all $t$, given input $x^t$, denote the compute model outputs as $y^t_{\text{fast}}=f_{\text{fast}}(x^t)$ and $y^t_{\text{slow}}=f_{\text{slow}}(x^t)$. Then, there exists an $\epsilon > 0$ such that:
\begin{align}
\label{eq:coresetLR}
 y^t_{\text{fast}} \in \left[y^t_{\text{slow}}, (1 + \epsilon) y^t_{\text{slow}} \right], 
\end{align}
where $\mathcal{X}$ and $\mathcal{Y}$ are the input and output distributions, meaning $x^t \sim \mathcal{X}$, and $y^t_{\text{slow}}, y^t_{\text{fast}} \sim \mathcal{Y}$. \cite{Boutsidis2012RichCF} provides the approximation factor $\epsilon$ based on the relative size of the core-set compared to the full training set. 
\end{property}

Property 1 allows us to relate the fast and slow model predictions as:
\begin{eqnarray}
    &~& y^t_{\text{slow}} \le y^t_{\text{fast}}
    \le 
    (1+ \epsilon) y^t_{\text{slow}} \nonumber \\
    &or,&
    y^t_{\text{slow}} \in \left[\frac{y^t_{\text{fast}}}{1+\epsilon},y^t_{\text{fast}}\right]
    \label{eq:pacLR}
\end{eqnarray}
Thus, the loss function can be upper bounded as:
%Using the relationship between the fast and the slow model, as in Equation \ref{eq:pacLR}, the loss function can be upper bounded as follows:
\begin{equation}
\label{eq:lossLR}
\mathcal{L}(y^t_{\text{fast}}, y^t_{\text{slow}}) \le
\frac{\epsilon^2 \cdot (y^t_{\text{fast}})^2 }{(1+\epsilon)^2}.
\end{equation}
Finally, we can use Equation \ref{eq:lossLR} and Lemma \ref{lem:general_inference_offload} to provide a closed-form model selection policy for Problem \ref{prob:inference} in the linear regression setting: 
\begin{eqnarray}
\label{eq:offPolLR}
    \pioffload(x^t,y^t_{\text{fast}}) = \indicator 
    \Bigg (\frac{\betacost}{\alphaaccuracy} \costcloud <  \frac{\epsilon^2 \cdot (y^t_{\text{fast}})^2 }{(1+\epsilon)^2} \Bigg).
\end{eqnarray}
We stress that Lemma 1 provides the \textit{optimal} solution and the above solution is an approximation since we upper-bound the loss function between fast and slow models. However, our subsequent experiments show this is a very tight bound and implementing Eq. \ref{eq:offPolLR} yields very close performance to an unrealizable oracle solution that has perfect knowledge of the fast and slow model predictions.

\subsection{Analytical Results for Deep Neural Networks (DNNs)}
\label{subsec:dnn}
We now provide a similar analysis to the linear regression scenario for the important case when a robotic perception DNN has been compressed using recently developed coreset guarantees \cite{baykal2019datadependent,liebenwein2020provable}.
%The analysis done in this section is very similar to the linear regression case (Subsection \ref{subsec:linear_reg}), with only noticeable difference in the assumption of relationship between the fast and the slow model.
%Using recent results for compressing a DNN, we assume that $f_{\text{fast}}$ is a compressed model of $f_{\text{slow}}$ using coresets \cite{baykal2019datadependent,liebenwein2020provable}.
Specifically, \cite{baykal2019datadependent} compresses fully connected DNNs with ReLU activations by targetedly removing weights with low relative importance via coresets. \cite{liebenwein2020provable} extends this work to convolutional neural networks (CNNs) by using coresets to remove convolutional filters that a prediction is least sensitive to, which enables a compressed DNN to provably approximate its original counterpart.

\tu{Compute Models}: Let the models $f_i$: $\mathbb{R}^n \rightarrow \mathbb{R}^m$, where $i \in \{\text{fast},\text{slow}\}$ be DNNs. Both models are trained on a set of samples drawn from a joint data distribution on $\mathcal{X} \times \mathcal{Y}$.

\tu{Loss Function}: As for linear regression, the loss function is an $L_2$ norm loss, such as for depth estimation from a perception CNN. 

We now use the following guarantees for DNNs. %\cite{baykal2019datadependent,liebenwein2020provable}: 
%Compressing fully connected neural networks and convolutional networks using \cite{baykal2019datadependent,liebenwein2020provable}, we can obtain the following guarantees:

\begin{property}[Relation between fast and slow models]
For all $t$, given $x^t$, $y^t_{\text{fast}}=f_{\text{fast}}(x^t)$, and $y^t_{\text{slow}}=f_{\text{slow}}(x^t)$, there exist an $\epsilon, \delta > 0$, such that the following holds \cite{baykal2019datadependent,liebenwein2020provable}:
\begin{align}
\mathbb{P} \Big ( y^t_{\text{fast}} \in \big[(1 - \epsilon) y^t_{\text{slow}},(1 + \epsilon) y^t_{\text{slow}}\big] \Big) \ge 1 - \delta, \label{eq:coresetDNN} \\ 
\mathbb{P} \bigg ( y^t_{\text{fast}} \in \left[y^t_{\text{slow}} - \frac{M}{2}, y^t_{\text{slow}} + \frac{M}{2}\right] \bigg) \le \delta,  
\label{eq:gencoreset2}
\end{align}
where $M \ge 0$ is an upper bound on the error described below. $\epsilon$ and $\delta$ depend on the extent of DNN compression.
Further, input $x^t \sim \mathcal{X}$ and outputs $y^t_{\text{slow}}, y^t_{\text{fast}} \sim \mathcal{Y}$.
\end{property}
Equation \ref{eq:gencoreset2} and bound $M$ arise from the observation that in practical engineering scenarios, the outputs of a neural network and thus the loss will be bounded since they have physical meaning. For example, for a regression loss with perception, $M \ge 0$ could be derived from the largest depth-reading a depth sensor can register. Likewise, for classification, $M$ is naturally bounded by 1 since the outputs $y$ are softmax scores from a cross-entropy loss.
%Note that, \cite{baykal2019datadependent} only guarantees Equation \ref{eq:coresetDNN}. In this work, we further impose the assumption as in Equation \ref{eq:gencoreset2}. Such an assumption can be easily imposed with no extra work for most of the practical purposes. The $M$ in Equation \ref{eq:gencoreset2} can be readily obtained when the upper bound on the loss (equivalently the output of the DNN) is known --- for example, in classification, the loss is $\{0,1\}$, so $M$ is naturally bounded by norm 1. 

We now use the core-set relationship to analyze Lemma 1 for DNN inference as follows:
%From the core-set relationship, $y^t_{\text{slow}}$ can be estimated from $y^t_{\text{fast}}$ as follows:
\begin{equation}
    y^t_{\text{slow}}
     \in \begin{cases} \left[ \frac{y^t_{\text{fast}}}{1+\epsilon}, \frac{y^t_{\text{fast}}}{1-\epsilon}\right] &\mbox{with prob. } \ge 1-\delta \\
    \left[ y^t_{\text{fast}}- \frac{M}{2}, y^t_{\text{fast}} + \frac{M}{2} \right] 
    &\mbox{with prob. } \le \delta. \end{cases}
    \label{eq:pacDNN}
\end{equation}

Thus, using Equation \ref{eq:pacDNN}, the loss can be upper bounded as:
\begin{equation}
\label{eq:lossDNN}
\mathcal{L}(y^t_{\text{fast}}, y^t_{\text{slow}}) \le
\begin{cases}
 \frac{\epsilon^2 \cdot {(y^t_{\text{fast}})}^2}{(1-\epsilon)^2} & \text{with probability}~(1-\delta) \\
M &\text{with probability}~\delta.
\end{cases}
\end{equation}

Therefore, the expectation of the loss function is:
\begin{equation}
\expec\left[ \mathcal{L}(y^t_{\text{fast}},y^t_{\text{slow}}) \right] \le \delta M + (1-\delta) \left( \frac{\epsilon^2 \cdot {(y^t_{\text{fast}})}^2}{(1-\epsilon)^2} \right).
\label{eq:expLossDNN}
\end{equation}

Finally, we can apply Equation \ref{eq:expLossDNN} and Lemma \ref{lem:general_inference_offload} to provide a closed-form model selection policy for Problem \ref{prob:inference} in the DNN setting:
\begin{equation}
\label{eq:offPolDNN}
    \pioffload^{}(x^t,y^t_{\text{fast}}) = \indicator 
    \Bigg (\frac{\betacost}{\alphaaccuracy} \costcloud <
    \delta M + (1-\delta) \frac{\epsilon^2 \cdot {(y^t_{\text{fast}})}^2}{(1-\epsilon)^2} \Bigg)
\end{equation}
% \left(
% \begin{eqnarray}
% \label{eq:offPolDNN}
%     \pioffload^{}(x^t,y^t_{\text{fast}}) = \indicator 
%     \Bigg (\frac{\betacost}{\alphaaccuracy} \costcloud &< \nonumber \\ 
%     \delta M + (1-\delta) \left(\frac{\epsilon^2 \cdot {(y^t_{\text{fast}})}^2}{(1-\epsilon)^2}\right) \Bigg)
% \end{eqnarray}

As for linear regression, we emphasize Lemma 1 is optimal and Eq. \ref{eq:offPolDNN} is an approximation since we are \textit{bounding} the expectation using the core-set guarantee. However, our experiments show that implementing Eq. \ref{eq:offPolDNN} as a proxy for Lemma 1 works well in practice. More broadly, we emphasize that core-set guarantees are simply one way to instantiate the general policy provided in Lemma 1. For example, a roboticist could also use other practically-relevant models such as random forests or even approximate databases if they can reliably relate fast and slow model accuracy.

%\subsection{Learning Based Approach}
%\label{subsec:learningBased}
%\todo{Bineet: I'm skipping this part as discussed.} When closed form analytic solutions to the model selection policy is not available (or very hard to obtain), such as given in Equation \ref{eq:offPolLR} and \ref{eq:offPolDNN}, we resort to a learning based approach. Failure to obtain closed form solutions to Problem \ref{prob:inference} (such as Equation \ref{eq:offPolLR} and \ref{eq:offPolDNN}) is primarily due to the unavailability of a relationship between the fast and the slow model, such as Equation \ref{eq:coresetLR} and \ref{eq:gencoreset2}. 
%
%In this section we discuss a learning based approach to solve Problem \ref{prob:inference}, that learns the function $\pioffload^*$. Let, the function denoting the learnt function $\pioffload^*$ be $f_{\text{learn}}: \mathbb{R}^n \times \mathbb{R}^m \rightarrow \{0,1\}$. To obtain $f_{\text{learn}}$, one can choose a suitable model such as DNN or linear regression, based on the application scenario. Once the model has been decided, learning can be done by using any off the shelf algorithm designed for those models.
%
%

\section{Application Scenarios}
\label{sec:app_scenarios}
%\section{Application Scenarios}
%\label{sec:app_scenarios}
We now describe example application scenarios of high-dimensional linear regression, DNN inference, and reachable set computation for a simulated Mars Rover to demonstrate the theoretical guarantees from Section \ref{sec:modelSelection}.

\subsection{Linear Regression}
\label{subsec:app_linR}
Using the analytical results from Subsection \ref{subsec:linear_reg}, we demonstrate our model selection policy by simulating it on a toy example of linear regression. Let $f_{\text{slow}}: \mathbb{R}^4 \rightarrow \mathbb{R}^4_{[0,1]}$ be any general-purpose linear regression model. The amount of time $f_{\text{slow}}$ takes to generate an output is 2.5 seconds. Using coresets, we compress the linear regression model $f_{\text{slow}}$, to a faster linear regression model $f_{\text{fast}}$. The $f_{\text{slow}}$ model takes 1 second to generate an output. The compression is such that the relation in Equation \ref{eq:coresetLR} holds with $\epsilon=0.1$.

\SC{We chose $\alpha=1$ and $\beta=0.003$ to emphasize accuracy over compute efficiency in our simulations, although $\alpha, \beta$ can be flexibly set by a user.} We implement the model selection policy as in Equation \ref{eq:offPolLR}, and demonstrate its performance in Section \ref{sec:eval} against benchmark policy selection algorithms (discussed in Subsection \ref{subsec:benchAlgo}).

\subsection{Compute Efficient Robotic Perception}
\label{subsec:app_dnn}
%\new{To illustrate the benefits of our model selection policy, on DNN inference (Eq. \ref{eq:offPolDNN}), with state-of-the-art perception models for a realistic robotic control scenario, we consider the problem where an aircraft must taxi on a runway using wing-mounted camera \cite{taxinet}. \cite{taxinet} uses a supervised learning to train a DNN to map from runway images to crosstrack $d$ and heading angle $\theta$, which are linearly combined to create a rudder control. Using an image for state estimation can be used for low-power UAVs to land on runways etc., thus making the aircraft taxi problem, as proposed in \cite{taxinet}, an ideal candidate to demonstrate the benefits of our model selection policy on a practical scenario in resource-constrained robotics. } 

We now stress-test our algorithm on a scenario, inspired by \cite{taxinet}, where an aircraft must autonomously track a runway center-line using a wing-mounted camera for state estimation. This scenario, henceforth referred to as the TaxiNet scenario as per \cite{taxinet}, uses a DNN to map from camera images to an estimate of the aircraft's lateral distance from the runway center-line $d$ and heading angle $\theta$, which are linearly combined to create the aicraft's steering control. We chose the TaxiNet scenario since the central idea is broadly applicable to resource-constrained robotics, such as low-power drones that use efficient vision models to estimate their real-time pose relative to a landing site.

%\new{We trained a ResNet-18 DNN (with more than 11M parameters) \cite{resnet18} using over 50K photo-realistic X-Plane simulator \cite{XPlane} training images (as used in \cite{taxinet}), taken from their dataset\footnote{\label{taxinetdata} \url{https://purl.stanford.edu/zz143mb4347}}, achieving a mean squared accuracy of 0.038, where prediction per image (that is, inference time) takes 0.17 seconds on an average --- this DNN is referred as the slow model $f_{\text{slow}}$. Using Pytorch Model Quantization \cite{pytorch}, we compressed $f_{\text{slow}}$, obtaining a 47.21\% reduction in inference time and 64\% increase in loss --- this compressed DNN is referred as the fast model $f_{\text{fast}}$. We note that the primary reason to use Pytorch Quantization versus provable approaches (such as \cite{baykal2019datadependent,liebenwein2020provable}) is the performance benefit achieved.} 

We trained a ResNet-18 DNN \cite{resnet18} to serve as the slow perception model $f_{\text{slow}}$ using over 50K images from the standard X-Plane simulator \cite{XPlane} using a publicly-available dataset \cite{taxinetDataset}. The ResNet-18 achieved a low MSE loss of 0.038 on an independent test dataset of 18,372 images, where each image took 0.17 seconds for inference on a CPU. We compressed $f_{\text{slow}}$ to yield a quantized ResNet-18 as $f_{\text{fast}}$, which was 47.21\% faster but had a 64\% higher loss, illustrating a clear need for model selection.
    
For the TaxiNet scenario, we chose the model costs of $c_{\text{slow}}=0.017$ and $c_{\text{fast}}=0$ based on their relative inference times and $\alpha=1$, $\beta=3 \times 10^{-4}$ to emphasize safety (low loss) over compute costs. Our model selection results are demonstrated in Section \ref{sec:eval} along with example DNN predictions from aircraft images in Figure \ref{fig:all_qual} (Right).

\subsection{Reachable Set Computation}
\label{subsec:reachSet}

In this subsection, we apply our model selection policy to safety assessment for robot navigation.
Consider a robot, such as a Mars rover, navigating an unexplored environment.
The robot has to assess whether its maneuvers are safe while considering environment uncertainties such as the coefficient of friction, wind disturbances, etc..
This is done by computing a \emph{reachable set}, a set that contains all the states a rover can potentially reach.
%
%, considering the environmental and state uncertainties.
%
The robot can make the maneuver safely if the reachable set does not overlap with any obstacles.
The reachable set computed by the fast compute model has confidence that is an order of magnitude lesser than the reachable set computed by the slow model.

We assume that the closed loop dynamics of the robot is given as a nonlinear system.
For computing the reachable sets, we approximate the nonlinear dynamics locally as an uncertain linear system, where the coefficients in the dynamics belong to a bounded range. 
\begin{example}
\label{eg:uncetainDyn}
Consider the discrete uncertain linear dynamical system $x^{+} = \Lambda x$ where $\Lambda$ = 
$
\begin{bmatrix}
    1  & \alpha \\
    4   & 6, 
\end{bmatrix}
$
where $x$ is the state, $x^{+}$ is the next state, and $\alpha \in [-2,2]$ represents either the modeling uncertainty or a parameter. 
%
% Sridhar: Why is the below sentence needed?
% Here, $\Lambda$ is called an uncertain dynamical matrix. 
% Further, a sample dynamics matrix $A$, where $A \in \Lambda$, has a specific valuation of the uncertain variables of an uncertain dynamical system. 
Given an initial state $x$, the reachable set of the uncertain linear system includes the set of states reached by the system for any value of $\alpha$ in the interval $[-2,2]$ for a specified time horizon $t$.
\end{example}

% Sridhar: Covered all these.
% The reachable set is defined as the set of states reached by a system up to a given time $t$, from a given initial set $\theta$. 
% %
% In many practical settings, system dynamics are not perfectly known. 
% %
% Thus, we define an uncertain dynamics matrix as a dynamics matrix where certain elements are given by a bounded parameter instead of a fixed value, such as:  
%For example, consider the following example of an uncertain linear system:

% Sridhar: Covered.
%
% In this subsection, we extend the compute models and loss function to a safety verification setting. We assume that a robot vehicle's closed loop behavior is given by a nonlinear system that is approximated as an uncertain, locally linear system. The robot has to navigate this environment while safely avoiding obstacles. The model selection policy must choose between two models that compute an approximate reachable set of the robot dynamical system in order to guarantee safety while minimizing compute time. 

%Using Lemma \ref{lem:general_inference_offload}, we extend our model selection policy to determine safety of a system using reachable sets under the presence of uncertainties.

Prior work~\cite{inproceedings,7318279,10.1145/3358229} has shown that computing reachable sets for linear systems with uncertainties is a computationally expensive process.
Recently, a statistical approximation of the reachable set has been presented in~\cite{ghosh2020}.
The confidence of the statistical approximation can be tuned by the user according to her performance and accuracy requirements.
Leveraging the flexibility of this statistical approach, we generate a fast compute model which has medium confidence and slow model that has high confidence over the computed reachable sets.
Given the various constraints on robot resources, the model selection policy should invoke the appropriate compute model to guarantee safety while minimizing the cost.

Formally, consider a linear dynamical system with uncertainties, represented as $x^+=\Lambda x$, where $\Lambda \subset \reals^{n \times n}$ is an uncertain dynamical matrix. 
The reachable set of the current state $x$ up to a time horizon $t$, is denoted as $\texttt{RS}(\Lambda,x,t)$. 
Though the dynamics is given as $x^+ = \Lambda x$, it can encompass the open loop behavior $x^+ = \Lambda_A x + \Lambda_B u$ (where $\Lambda_A,\Lambda_B$ are uncertain matrices), if a control sequence $u$ is provided. 
In such cases, the uncertain matrices $\Lambda_A,\Lambda_B$ are combined together to $\Lambda$ by concatenating the state and open-loop control.

A system is unsafe if the reachable set intersects with the unsafe set, such as obstacles. That is, given an unsafe set $U \subset \mathbb{R}^n$, a system is unsafe if and only if $\texttt{RS}(\Lambda,x,t) \cap U \ne \emptyset$.
%
%Sridhar: I used \mu because \delta was used later in the definition of type 1 error. Feel free to change it.
%
Given $\mu > 0$ and a set $S \subseteq \mathbb{R}^n$, we denote the uniform expansion (\textit{bloating}) of set $S$ by $\mu$ as $B_{\mu}(S)$.

We now formally present the model selection problem for safety assessment of robot navigation.

% Given linear uncertain system dynamics $\Lambda$, time $t$ and an initial set $\theta$, the output of a model

\tu{Computation Models}: The compute models $i \in \{\mathrm{fast}, \mathrm{slow}\}$, denoted as  $\texttt{RS}_{i}(\Lambda,x,t)$, compute approximations of the reachable set of an uncertain linear system defined by $\Lambda$ \cite{ghosh2020}. The statistical guarantee $\mathcal{G}_i$ associated with $\texttt{RS}_{i}$ is as follows:
$$
\mathcal{G}_i:~\text{for any}~A \in \Lambda,~ \mathbb{P} \big(\texttt{RS}_{i}(A,x,t) \subseteq \texttt{RS}_{i}(\Lambda,x,t)\big) \ge p_i
$$
$\mathcal{G}_i$ has a \emph{type I error} of $\delta_i$. Here, the confidence $p_i \in \mathbb{R}_{[0,1]}$ and allowable type I error $\delta_i$ are user-given parameters to the models. 
Intuitively, $\mathcal{G}_i$ means the probability that the reachable set of any sample dynamics is contained within the reachable set $\texttt{RS}_{i}(\cdot)$ is at least probability $p_i$. 
Computing high-confidence approximations of the reachable set requires more statistical samples and therefore a higher computational time and cost. 
In particular, the required confidence $p_i$ set by a user for statistical guarantee $\mathcal{G}_i$ is directly proportional to the required number of samples. 
Thus, we set the slow model to be a high-confidence reachable set and the fast model to be a lower-confidence approximation, so $p_{\mathrm{slow}} > p_{\mathrm{fast}}$, $\delta_{\mathrm{slow}} \le \delta_{\mathrm{fast}}$, and therefore $c_{\mathrm{slow}} > c_{\mathrm{fast}}$.
We denote the outputs of the fast and slow models 
% by Specifically, let the fast and slow model outputs be reachable sets denoted by 
as $y^t_{\text{fast}}=\texttt{RS}_{\text{fast}}(\Lambda,x,t)$ and $y^t_{\text{slow}}=\texttt{RS}_{\text{slow}}(\Lambda,x,t)$.

%The crux of the accuracy-compute trade-off comes from the fact that the
%the user-defined probability $p_i$ (e.g confidence) for statistical
%We assume the cost to compute $\texttt{RS}_{i}(\Lambda,\theta,t)$ is $c_{i}$.
%%
%We further assume $p_{\mathrm{slow}} > p_{\mathrm{fast}}$, $\delta_{\mathrm{slow}} \le \delta_{\mathrm{fast}}$ and $c_{\mathrm{slow}} > c_{\mathrm{fast}}$. Note that, $\mathcal{G}_i$ is the statistical guarantee associated with $\texttt{RS}_{i}(\cdot)$. And the cost $c_i$ associated with $\texttt{RS}_{i}(\cdot)$ is dependent on the number of samples observed by their algorithm to compute the reachable set --- higher the required confidence, higher is the number of samples required to compute the reachable set. Therefore, the cost is directly proportional to desired confidence (equivalently, number of observed samples). For instance, the cost can be considered as the number of samples needed, or the computational time of their algorithm.
%Let, $f_{\text{fast}}=\texttt{RS}_{\text{fast}}$ and $f_{\text{slow}}=\texttt{RS}_{\text{slow}}$. 
%Therefore, $y^t_{\text{fast}}=\texttt{RS}_{\text{fast}}(\Lambda,\theta,t)$ and $y^t_{\text{slow}}=\texttt{RS}_{\text{slow}}(\Lambda,\theta,t)$.

The crux of our selection policy is that we can relate the reachable sets returned by both models by a factor of $\epsilon$:
%Given the reachable sets computed by the fast and slow model, we can compute an $\epsilon$ such that the following relationship holds:
\begin{property}[Relationship between fast and slow models]
Given $\Lambda$ and $\theta$, for all $t$, there exists an $\epsilon$ such that:
\begin{eqnarray}
 &~&
 B_{1-\epsilon} \big(\texttt{RS}_{slow}(\Lambda,x,t) \big) \subseteq  \texttt{RS}_{fast}(\Lambda,x,t)  \\
 &\text{or,}&
 \texttt{RS}_{\mathrm{slow}}(\Lambda,x,t) \subseteq B_{\frac{1}{1-\epsilon}} \big( \texttt{RS}_{\mathrm{fast}}(\Lambda,x,t) \big).
\label{eq:pacRS}
\end{eqnarray}
\end{property}
In a calibration dataset, we can compute the fast and slow model reachable sets for all time steps. Then, we can set $\epsilon$ to be the minimum factor to bloat the robot's set such that the bloated version over-approximates the slow model's reachable set at all times. %Then, we choose $\epsilon$ as the maximum bloat factor during online deployment.
%Crucially, the above relationship allows us to over-approximate the slow model's reachable set simply by invoking the fast model and uniformly expanding it by tolerance $\epsilon$. 
%
Thus, a robot can quickly run the fast model, bloat it by $\frac{1}{1-\epsilon}$, and continue planning if the \textit{bloated} set does not intersect an unsafe region, as formalized below. 

\tu{Loss Function}: Given the safety-critical nature of navigation, the loss is 0 when the reachable set doesn't intersect an unsafe set and $\infty$ otherwise. Defining the reachable sets used to compute intersections with obstacles as $\Theta_{\text{fast}} =
B_{\frac{1}{1-\epsilon}} (y^t_{\mathrm{fast}})$ and $\Theta_{\text{slow}} = y^t_{\mathrm{slow}} = \texttt{RS}_{\mathrm{slow}}(\Lambda,x,t)$, the loss for any model $i \in \{\mathrm{fast}, \mathrm{slow}\}$ is: 
\begin{equation}
\begin{small}
\mathcal{L}(\Theta_i) 
=
\begin{cases}
%0 & B_{\frac{1}{1-\epsilon}}(y^t_{\text{fast}}) \cap U  = \emptyset \\
%\infty & B_{\frac{1}{1-\epsilon}}(y^t_{\text{fast}}) \cap U \ne \emptyset.
    0 & \Theta_i \cap U  = \emptyset
    \\
    \infty & \mathrm{otherwise}.
\end{cases}
\end{small}
\label{eq:loss_nav}
\end{equation}

%\tu{Loss Function}: Given the safety-critical nature of navigation, the loss is 0 when the reachable set doesn't intersect an unsafe set and $\infty$ otherwise. Recalling that $y^t_{\text{fast}} = \texttt{RS}_{\mathrm{fast}}(\Lambda,x,t)$ and $y^t_{\text{slow}} = \texttt{RS}_{\mathrm{slow}}(\Lambda,x,t)$, the loss is :
%\begin{equation}
%\begin{small}
%\mathcal{L}(y^t_{\text{fast}},y^t_{\text{slow}}) 
%=
%\begin{cases}
%%0 & B_{\frac{1}{1-\epsilon}}(y^t_{\text{fast}}) \cap U  = \emptyset \\
%%\infty & B_{\frac{1}{1-\epsilon}}(y^t_{\text{fast}}) \cap U \ne \emptyset.
%    0 & \big(B_{\frac{1}{1-\epsilon}}(y^t_{\text{fast}}) \cap U  = \emptyset \big)
%    \land \big(y^t_{\text{slow}} \cap U  = \emptyset \big) 
%    \\
%    \infty & \mathrm{otherwise}.
%\end{cases}
%\end{small}
%\end{equation}

Finally, using Equation \ref{eq:pacRS} and Theorem \ref{lem:general_inference_offload}, the model selection policy that solves Problem \ref{prob:inference} for safety assessment during robot navigation is:
\begin{eqnarray}
\label{eq:offPolRS}
    \pioffload^{*}(y^t_{\text{fast}}) = \indicator 
    \Bigg (B_{\frac{1}{1-\epsilon}}(y^t_{\text{fast}}) \cap U \ne \emptyset \Bigg).
\end{eqnarray}
Intuitively, the above policy exploits the relationship between fast and slow models by first bloating the fast model's reachable set by a factor of $\frac{1}{1-\epsilon}$ to create a guaranteed over-approximation of the slow model's reachability computation. 
%
% Sridhar: Below statement is odd. ... even this... Sounds very odd.
If the over-approximation does not intersect obstacles, we are guaranteed safety and simply proceed. If not, we need to invoke the slow model to assess its higher-fidelity reachable set and re-plan a trajectory if it indicates unsafety.
While we implemented our policy with $\alpha=0.7,\beta=0.3$ to prioritize safety, safety is also heavily emphasized in the loss function (Eq. \ref{eq:loss_nav}) since the penalty is $\infty$ for collisions.

%We plan to extend our linear systems analysis to Hamilton-Jacobi-Bellman (HJB) reachability in future work. 
%\new{Intuitively, we consult the slow model only when the robot suspects a possible unsafe intersection with obstacles given the 
%the coarse reachable set computed by the fast model. As the relationship between the fast and the slow model is known, the reachable set returned by the fast model is bloated to get an over-approximate behavior of the reachable set returned by the slow model. If an unsafe behavior is detected using the bloated reachable set of the fast model, offloading occurs; that is, the slow model is invoked to determine the safety.}

\subsection{Benchmark Algorithm}
\label{subsec:benchAlgo}
We evaluate the performance of our model selection policy against the following benchmark policies:
%\begin{itemize}

    \noindent \tnm{Fast}: This policy always uses the fast model with prediction $y^t_{\text{fast}}$ for all $t$.

    \noindent \tnm{Slow}: This policy always uses the slow model with prediction $y^t_{\text{slow}}$ for all $t$.

    \noindent \tnm{Random}: The robot randomly chooses between the fast and slow model with equal probability.

    \noindent \tnm{Our Selector}: This represents our model selection policy from Equations \ref{eq:offPolLR}, \ref{eq:offPolDNN}, and \ref{eq:offPolRS}.

    \noindent \tnm{Oracle}: This strategy assumes that the slow model's output is available to the model selection function at the time of inference. Thus, this strategy only selects the slow model when that decision has a better reward than using the fast model. The oracle is an upper-bound, unrealizable strategy since it assumes privileged knowledge of the slow model.
    %which our model selection policy (\tnm{Our Selector}) does not have access to at the time of inference. 
%\end{itemize}

%\section{Figures}
%\label{sec:figs}

%\begin{figure}[ht]
%  \centering
%  \includegraphics[width=5.5cm, height=4cm]{}
% \caption{\textbf{Reachable Set Rewards} - Rewards (Eq. \ref{eq:inference_reward}) gathered by various offloading strategies. Clearly our model selection policy (\tnm{Optimal}) achieves the maximum reward compared to other benchmark policies. Policies like \tnm{Random} has a higher variance as this does not leverage the relationship between the fast and the slow model, and therefore has no guarantee provided on the gathered reward for its decision.}
% \label{fig:RS_Rwd}
%\end{figure}

%\newpage

\section{Evaluation}
\label{sec:eval}
%\section{Evaluation}
%\label{sec:eval}
The principal objective of our evaluation is to show that our model selection policies
from Lemma 1 and Equations \ref{eq:offPolLR}, \ref{eq:offPolDNN}, and \ref{eq:offPolRS}
achieve a significantly higher reward than benchmark model selection policies.
Further, we show how our policy achieves better accuracy with a lower cost than competing
benchmarks on simulations of linear regression, aircraft taxiing with state-of-the-art DNN perception models, and rover navigation with real Martian terrain data. All our code (in Python) and models are publicly available at \cite{modelselection}.

\begin{figure*}[ht]
  \centering
 \includegraphics[width=15cm,height=3.8cm]{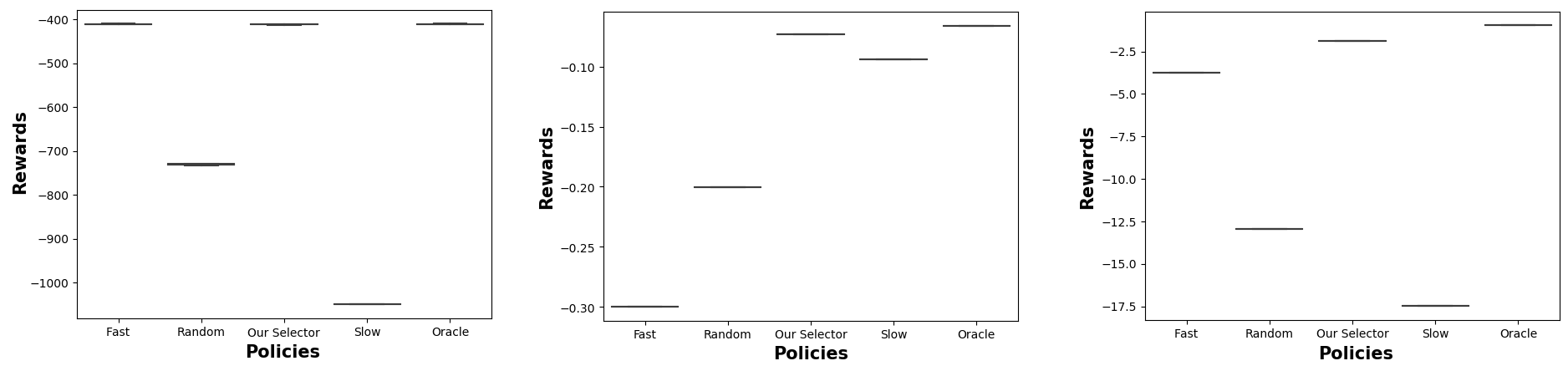}
 \caption{\textbf{Rewards. (\textit{Left: Linear Regression, Center: DNN, Right: Mars Rover Reachable Set})}. We illustrate the cumulative rewards (Eq. \ref{eq:inference_reward}) gathered by various policies on the Linear Regression, DNN perception (TaxiNet), and Mars Reachable Set scenarios, respectively. Clearly, our policy (\tnm{Our Selector}) achieves the maximum reward compared to other realizable benchmarks and is close to the oracle in all cases.}
 %Policies like \tnm{Random} have a higher variance as they does not leverage the relationship between the fast and the slow model.} 
 \label{fig:all_rwd}
\end{figure*}

\begin{figure*}[ht]
  \centering
 \includegraphics[width=16cm,height=4cm]{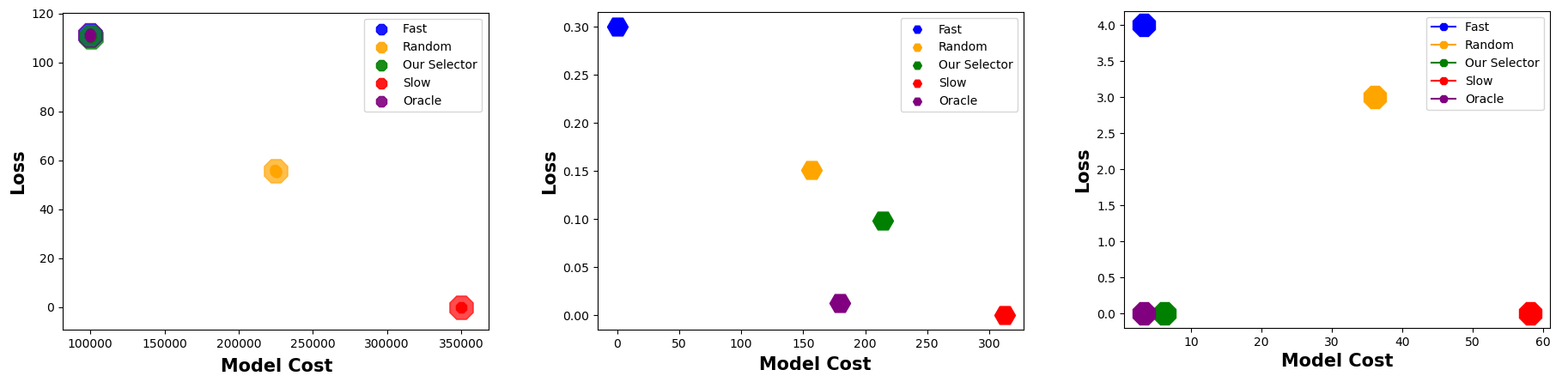}
 \caption{\textbf{DNN Cost vs. Accuracy. (\textit{Left: Linear Regression, Center: DNN, Right: Mars Rover Reachable Set})}. Cost vs. Loss trade-off achieved by various model selection policies on all scenarios. In all cases, we observe that the \tnm{Fast} policy has low cost but low accuracy, \tnm{Slow} has high accuracy but high cost, and \tnm{Random} lies sub-optimally in the middle (with a high variance). Only the selection policy proposed in this paper (\tnm{Our Selector}) achieves a delicate balance by exploiting the statistical relationship between models to intelligently consult the slow model.}
 \label{fig:all_cvl}
\end{figure*}

\begin{figure*}[ht]
  \centering
 \includegraphics[width=16.2cm,height=3.2cm]{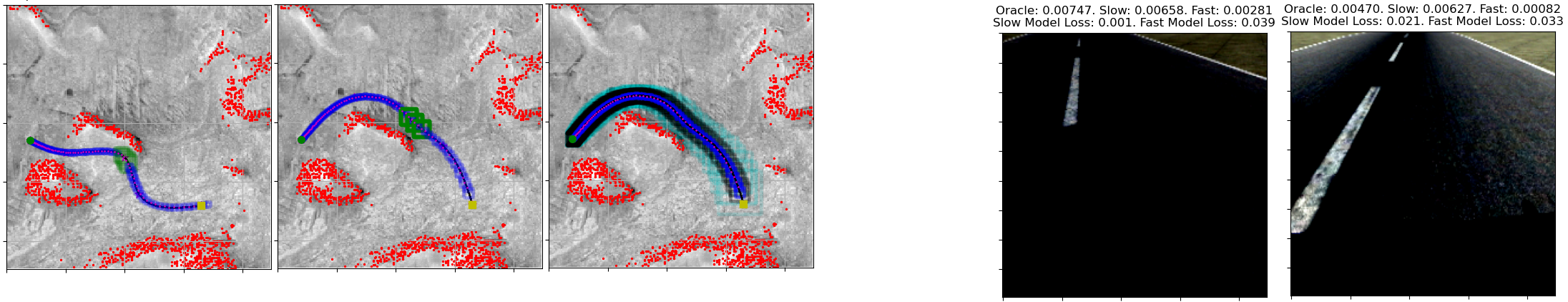}
 \caption{\textbf{Left: (Safe  Navigation  of  a  Mars  Rover).} We consider navigation of a simulated rover on real Mars HiRise terrain data \cite{hirise}, where the red point clouds are obstacles indicating regions of high elevation, as processed in \cite{nakanoya2020taskrelevant}. \textit{First two images:} We show the reachable sets of two different possible routes taken by the Mars Rover. The reachable set in green is computed by the slow model, whereas the one in blue comes from the fast model. 
 Path safety is determined by assessing if the reachable set (accounting for uncertainties), intersects red obstacles. Clearly, our model selection policy uses the slow model only when the rover makes tricky maneuvers. Otherwise, when the rover is far from obstacles, the fast model is sufficient to determine safety, allowing us to maintain high safety with a lower compute time. 
  \textit{Third image:} We show the relationship beween reachable sets computed by the fast model (in blue) and the slow model (in black). Crucially, our policy uses the over-approximated reachable set of the slow model as computed by the fast model (in cyan), that is, $B_{\frac{1}{1-\epsilon}}(y^t_\text{fast})$. Clearly, our model selection policy only consults the slow model when it suspects a possible collision when the set represented in cyan intersects with a red obstacle. Our policy always led to safe, collision-free, efficient navigation by exploiting the relationship between fast and slow models. \textbf{Right: (Aircraft TaxiNet DNN Output)}. The output of the fast and slow DNN models for a ResNet-18 TaxiNet model. Given an image, the final output shown is the rudder control.} 
 %Policies like \tnm{Random} have a higher variance as they does not leverage the relationship between the fast and the slow model.} 
 \label{fig:all_qual}
\end{figure*}

%We implemented model selection policies as in Equations \ref{eq:offPolLR}, \ref{eq:offPolDNN} and \ref{eq:offPolRS} in a \texttt{Python} based tool.
%%
%In the following subsections, we will evaluate our model selection policies for linear regression, DNN and safety verification using reachable sets against benchmark model selection policies as discussed in Subsection \ref{subsec:benchAlgo}.
%Some of the key highlights from our evaluation, for all the application scenarios, are as follows: (i) Our model selection policy achieves maximum reward compared to other benchmark algorithms. (ii) Our model selection policy delicately balances loss and cost, whereas other benchmark policies fail to do so.

\subsection{Linear Regression Results}
\label{subsec:eval_LR}
We now evaluate our selection policy for linear regression, as described in Equation \ref{eq:offPolLR} and Subsection \ref{subsec:app_linR}. The key highlight is that our policy achieves 245.4\% higher reward than benchmarks in 100 trials, each of duration $N=10^5$ timesteps with stochastic Gaussian inputs $x^t$. Figures \ref{fig:all_rwd} (Left) and \ref{fig:all_cvl} (Left) show the cumulative rewards and trade-off between accuracy and cost, respectively, of all algorithms.

%\begin{figure}[t]
%\centering
%    \includegraphics[width=\columnwidth]{images/LinReg_TotalRewards_BoxFigure.png}
%    \caption{\textbf{Linear Regression Rewards}: Rewards (eq. \ref{eq:inference_reward}) gathered by various offloading strategies over a period of time. Optimal policy (eq. \ref{eq:offPolLR}) is the method proposed in this paper.}
%\label{fig:LR_Rwd}
%end{figure}

%\begin{figure}[t]
%\centering
%    \includegraphics[width=\columnwidth]{}
%    \caption{\textbf{Linear Regression Cost vs. Accuracy}: Cost vs. Loss trade-off achieved by various offloading strategies over a period of time. Optimal policy (eq. \ref{eq:offPolLR}) is the method proposed in this paper.}
%\label{fig:LR_LvC}
%\end{figure}

\subsection{Deep Neural Networks (DNN)}
\label{subsec:eval_DNN}

We now evaluate our model selection policy for the TaxiNet aircraft taxiing scenario from Subsection \ref{subsec:app_dnn}. Our key result on 18,372 \textit{test} images is shown in Figure \ref{fig:all_rwd} (Center), where our policy (\tnm{Our Selector}) achieves 22.22\% higher reward than competing benchmarks and is within 10.18\% the performance of an upper-bound \tnm{Oracle}. Moreover, Figure \ref{fig:all_cvl} (Center) shows that our model selection policy achieves low loss with low cost unlike competing policies. This is because our policy leverages the statistical correlation between models to mostly rely on the fast model to reduce cost, but also opportunistically queries the slow model for higher accuracy. However, our policy is careful to only invoke the slow, accurate model when there is a substantial accuracy gain, leading it to be queried only 68.6\% of the time.

\subsection{Reachable Set Computation}
\label{subsec:eval_RS}
We now demonstrate the performance of our model selection policy (Equation \ref{eq:offPolRS}) to determine the safety of a simulated Mars Rover navigating steep obstacles on terrain from NASA's HiRise Dataset \cite{hirise,nakanoya2020taskrelevant}. A low-power rover must always be safe, but also fast and compute-and-power-efficient while accounting for reachable sets while planning. 

The rover is assumed to follow a linearized bicycle model with bounded perturbations in the dynamics matrix for yaw angle. Given an intended path, we use our model selection policy (Equation \ref{eq:offPolRS}) to determine safety given uncertain dynamics while minimizing compute time.
Specifically, given a start set, desired goal, and a set of way-points, we compute a reference trajectory using a cubic spline planner, which is followed using Model Predictive Control (MPC).
%The following parameters were used for the experiments: $c_{\text{slow}}=0.743$, $c_{\text{fast}}=0.0361$, $\epsilon=0.999$, $\beta=0.3$ and $\alpha=0.7$.
Using the planned states $x$ and controls $u$ at every time, our model selection policy must determine the trajectory's safety by invoking either a fast or slow reachable set computation model as described in Subsection \ref{subsec:reachSet}.

Figure \ref{fig:all_qual} (Left, first two images) shows how our policy (Equation \ref{eq:offPolRS}) safely, but efficiently, follows two different paths near a red obstacle indicating an unsafe terrain gradient above 20 degrees. The key benefit of our approach is that the robot mostly uses the fast reachable set computation (blue) for high-efficiency and only \textit{intelligently} consults the higher-fidelity slower model during tricky turns close to an obstacle. 
Indeed, Figure \ref{fig:all_qual} (Left, third image) precisely shows how our policy (Equation \ref{eq:offPolRS}) exploits the relationship between fast and slow models to selectively query the slow model only when required during key turns. The fast model's reachable set result is in blue, the slow model's result is in black, and the \textit{over-approximation} from bloating the fast model's result by $B_{\frac{1}{1-\epsilon}}(y^t_{\text{fast}})$ is in cyan. 

Clearly, even the over-approximation rarely intersects unsafe obstacles and it is only necessary to consult a fine-grained result from the slow model (black) when the over-approximation is too conservative and needs to be refined. In all scenarios, we rigorously verified the simulated rover is safe and never hits an obstacle despite dynamics uncertainties.
Figures \ref{fig:all_rwd} (Right) and \ref{fig:all_cvl} (Right) quantitatively illustrates the superior efficiency and accuracy (safety) of our policy, since it achieves the highest reward, never hits an obstacle, and efficiently only queries the slow model on-demand near critical obstacles. 

\textit{Limitations of Our Work: }
In the future, we plan to account for more sophisticated nonlinear dynamics using Hamilton-Jacobi-Bellman reachability analysis. 
%Further, we can test our current theory to actually run perception DNNs compressed with core-sets on a low-power robot. 
Finally, future work should address multi-step decision-making, where model selection decisions affect subsequent measurements and control decisions.

%we show the rewards gathered by the model selection policies over two different scenarios (where the scenarios are shown in Figure \ref{fig:RS_all}).
%We provide a plot showing the cost vs. loss of the model selection polices (for scenario 1) in Figure \ref{fig:RS_Rwd_LvC} (Left). We see that our algorithm delicately balances accuracy and efficiency by querying the larger model only when it is close to the obstacles and it suspects a collision.

%In Figure \ref{fig:RS_Rwd_LvC} (Right), we show the rewards gathered by the model selection policies over two different scenarios (where the scenarios are shown in Figure \ref{fig:RS_all}).
%We provide a plot showing the cost vs. loss of the model selection polices (for scenario 1) in Figure \ref{fig:RS_Rwd_LvC} (Left). We see that our algorithm delicately balances accuracy and efficiency by querying the larger model only when it is close to the obstacles and it suspects a collision.

\section{Conclusion}
\label{sec:conclusion}
To scale the deployment of low-power robotic swarms, it is increasingly important to optimize for compute energy, cost, and latency alongside standard metrics of task accuracy and resiliency. This paper presents a general algorithm for robots to flexibly trade-off task accuracy and compute cost in an interpretable manner with provable statistical guarantees. Our key insight is to leverage the statistical correlations between models to \textit{predict} the marginal accuracy gain of a large model and balance it with additional compute costs. This general principle allows our framework to widely apply to cloud robotics, DNN perception, and reachability analysis.

%In future work, we plan to address safety guarantees and investigate whether we can co-train large and small DNNs such that the small DNN verifiably adheres to a list of formal specifications, ranked by importance to a high-level task. Further, we want to synthesize an interpretable run-time monitor during this co-training process, which will transfer authority to a trusted controller if the DNNs are operating in uncertain regimes.
In the future, we plan to address safety guarantees and investigate whether we can co-train large and small DNNs such that we can synthesize an interpretable run-time monitor that can transfer authority to a trusted controller if the DNNs are operating in uncertain regimes.
%Finally, we will extend our analysis to Hamilton-Jacobi-Bellman reachability methods. 
Overall, we anticipate our model-selection results will become stronger with future advances in DNN verification and compression with approximation guarantees.%, both of which are ongoing research efforts.

%where the small and large models will be reachability solvers with non-uniform state space discretization based on operating regimes of interest 
% explicit notion of safety
% co-train a large DNN and compressed version, s.t. we can verify properties of
% interest for the small one
% more sophisticated reachability methods such as HJB and non-uniform discretization of state space for fast, slow models
% unified framework, anticipate our results will get better as compressing DNNs with provable guarantees + verification is a burgeoning research effort

%The generality of our approach allows it to apply to diverse domains ranging from perception with DNNs to reachability analysis.
% previous focus on accuracy and autonomy
% equally important to focus on required energy, delay, compute-resources
% low-power operations to scale, extend drone battery life etc.
% paper presents a unified view 
% key insight was to use small models that provably approximate large models
% interpretable model selection policy, which essentially is just a comparison

\bibliographystyle{IEEEtran}
\bibliography{main}

\begin{thebibliography}{10}
\providecommand{\url}[1]{#1}
\csname url@rmstyle\endcsname
\providecommand{\newblock}{\relax}
\providecommand{\bibinfo}[2]{#2}
\providecommand\BIBentrySTDinterwordspacing{\spaceskip=0pt\relax}
\providecommand\BIBentryALTinterwordstretchfactor{4}
\providecommand\BIBentryALTinterwordspacing{\spaceskip=\fontdimen2\font plus
\BIBentryALTinterwordstretchfactor\fontdimen3\font minus
  \fontdimen4\font\relax}
\providecommand\BIBforeignlanguage[2]{{%
\expandafter\ifx\csname l@#1\endcsname\relax
\typeout{** WARNING: IEEEtran.bst: No hyphenation pattern has been}%
\typeout{** loaded for the language `#1'. Using the pattern for}%
\typeout{** the default language instead.}%
\else
\language=\csname l@#1\endcsname
\fi
#2}}

\bibitem{chinchali2019network}
S.~Chinchali, A.~Sharma, J.~Harrison, A.~Elhafsi, D.~Kang, E.~Pergament,
  E.~Cidon, S.~Katti, and M.~Pavone, ``Network offloading policies for cloud
  robotics: A learning-based approach,'' in \emph{Proceedings of Robotics:
  Science and Systems}, Freiburg im Breisgau, Germany, June 2019.

\bibitem{rahman2017motion}
A.~Rahman, J.~Jin, A.~Cricenti, A.~Rahman, and M.~Panda, ``Motion and
  connectivity aware offloading in cloud robotics via genetic algorithm,'' in
  \emph{GLOBECOM 2017-2017 IEEE Global Communications Conference}.\hskip 1em
  plus 0.5em minus 0.4em\relax IEEE, 2017, pp. 1--6.

\bibitem{cakmak2012designing}
M.~Cakmak and A.~L. Thomaz, ``Designing robot learners that ask good
  questions,'' in \emph{2012 7th ACM/IEEE International Conference on
  Human-Robot Interaction (HRI)}.\hskip 1em plus 0.5em minus 0.4em\relax IEEE,
  2012, pp. 17--24.

\bibitem{whitney2017reducing}
D.~Whitney, E.~Rosen, J.~MacGlashan, L.~L. Wong, and S.~Tellex, ``Reducing
  errors in object-fetching interactions through social feedback,'' in
  \emph{2017 IEEE International Conference on Robotics and Automation
  (ICRA)}.\hskip 1em plus 0.5em minus 0.4em\relax IEEE, 2017, pp. 1006--1013.

\bibitem{kaipa2016enhancing}
K.~N. Kaipa, A.~S. Kankanhalli-Nagendra, N.~B. Kumbla, S.~Shriyam, S.~S.
  Thevendria-Karthic, J.~A. Marvel, and S.~K. Gupta, ``Enhancing robotic
  unstructured bin-picking performance by enabling remote human interventions
  in challenging perception scenarios,'' in \emph{2016 IEEE International
  Conference on Automation Science and Engineering (CASE)}.\hskip 1em plus
  0.5em minus 0.4em\relax IEEE, 2016, pp. 639--645.

\bibitem{eshratifar2020runtime}
A.~E. Eshratifar and M.~Pedram, ``Runtime deep model multiplexing for reduced
  latency and energy consumption inference,'' in \emph{2020 IEEE 38th
  International Conference on Computer Design (ICCD)}.\hskip 1em plus 0.5em
  minus 0.4em\relax IEEE, 2020, pp. 263--270.

\bibitem{dorka2020modality}
N.~Dorka, J.~Meyer, and W.~Burgard, ``Modality-buffet for real-time object
  detection,'' in \emph{2020 IEEE/RSJ International Conference on Intelligent
  Robots and Systems (IROS)}, 2020, pp. 10\,543--10\,549.

\bibitem{dinh2018learning}
T.~Q. Dinh, Q.~D. La, T.~Q. Quek, and H.~Shin, ``Learning for computation
  offloading in mobile edge computing,'' \emph{IEEE Transactions on
  Communications}, vol.~66, no.~12, pp. 6353--6367, 2018.

\bibitem{baykal2019datadependent}
C.~Baykal, L.~Liebenwein, I.~Gilitschenski, D.~Feldman, and D.~Rus,
  ``Data-dependent coresets for compressing neural networks with applications
  to generalization bounds,'' in \emph{International Conference on Learning
  Representations}, 2019.

\bibitem{liebenwein2020provable}
L.~Liebenwein, C.~Baykal, H.~Lang, D.~Feldman, and D.~Rus, ``Provable filter
  pruning for efficient neural networks,'' in \emph{8th International
  Conference on Learning Representations, {ICLR} 2020, Addis Ababa, Ethiopia,
  April 26-30, 2020}.\hskip 1em plus 0.5em minus 0.4em\relax OpenReview.net,
  2020.

\bibitem{tan2019efficientnet}
M.~Tan and Q.~V. Le, ``Efficientnet: Rethinking model scaling for convolutional
  neural networks,'' in \emph{Proceedings of the 36th International Conference
  on Machine Learning, {ICML} 2019, 9-15 June 2019, Long Beach, California,
  {USA}}, ser. Proceedings of Machine Learning Research, vol.~97.\hskip 1em
  plus 0.5em minus 0.4em\relax {PMLR}, 2019, pp. 6105--6114.

\bibitem{Boutsidis2012RichCF}
C.~Boutsidis, P.~Drineas, and M.~Magdon-Ismail, ``Near-optimal coresets for
  least-squares regression,'' \emph{IEEE Transactions on Information Theory},
  vol.~59, no.~10, pp. 6880--6892, 2013.

\bibitem{taxinet}
K.~D. Julian, R.~Lee, and M.~J. Kochenderfer, ``Validation of image-based
  neural network controllers through adaptive stress testing,'' in \emph{2020
  IEEE 23rd International Conference on Intelligent Transportation Systems
  (ITSC)}, 2020, pp. 1--7.

\bibitem{resnet18}
K.~He, X.~Zhang, S.~Ren, and J.~Sun, ``Deep residual learning for image
  recognition,'' in \emph{2016 IEEE Conference on Computer Vision and Pattern
  Recognition (CVPR)}, 2016, pp. 770--778.

\bibitem{XPlane}
``X-plane 11,'' \url{https://www.x-plane.com/}.

\bibitem{taxinetDataset}
S.~M. Katz, A.~Corso, S.~Chinchali, A.~Elhafsi, A.~Sharma, M.~Pavone, and M.~J.
  Kochenderfer, ``Nasa uli aircraft taxi dataset,'' 2021, stanford Research
  Data, \url{https://purl.stanford.edu/zz143mb4347}.

\bibitem{inproceedings}
M.~Althoff, C.~Guernic, and B.~Krogh, ``Reachable set computation for uncertain
  time-varying linear systems,'' 01 2011, pp. 93--102.

\bibitem{7318279}
R.~{Lal} and P.~{Prabhakar}, ``Bounded error flowpipe computation of
  parameterized linear systems,'' in \emph{2015 International Conference on
  Embedded Software (EMSOFT)}, 2015, pp. 237--246.

\bibitem{10.1145/3358229}
B.~Ghosh and P.~S. Duggirala, ``Robust reachable set: Accounting for
  uncertainties in linear dynamical systems,'' \emph{ACM Trans. Embed. Comput.
  Syst.}, vol.~18, no.~5s, Oct. 2019.

\bibitem{ghosh2020}
\BIBentryALTinterwordspacing
B.~Ghosh and {P. S. Duggirala}, ``Reachability of linear uncertain systems:
  Sampling based approaches,'' University of North Carolina at Chapel Hill,
  Tech. Rep., 2020. [Online]. Available:
  \url{https://drive.google.com/file/d/18qhQh1rZZEMRzjhPCknRsdUy567sqXRZ/view?usp=sharing}
\BIBentrySTDinterwordspacing

\bibitem{modelselection}
\BIBentryALTinterwordspacing
B.~Ghosh, S.~Chinchali, and {P. S. Duggirala}, ``Interpretable trade-offs
  between robot task accuracy and compute efficiency,'' University of North
  Carolina at Chapel Hill, Tech. Rep., 2021. [Online]. Available:
  \url{https://sites.google.com/view/modelselection}
\BIBentrySTDinterwordspacing

\bibitem{hirise}
G.~Doran, S.~Lu, L.~Mandrake, and K.~Wagstaff, ``{Mars orbital image (HiRISE)
  labeled data set version 3},'' 2019.

\bibitem{nakanoya2020taskrelevant}
M.~Nakanoya, S.~Chinchali, A.~Anemogiannis, A.~Datta, S.~Katti, and M.~Pavone,
  ``Task-relevant representation learning for networked robotic perception,''
  \emph{CoRR}, vol. abs/2011.03216, 2020.

\end{thebibliography}

%\newpage
%\appendix
%\input{appendix}

\end{document}